\newtheorem{lemma}{Lemma}
\newtheorem{theorem}{Theorem}
\newcommand{\BlackBox}{\rule{1.5ex}{1.5ex}}  
\newenvironment{proof}{\par\noindent{\bf Proof\ }}{\hfill\BlackBox\\[2mm]}
\newcommand{\E}{\mathbb{E}}
\DeclareMathOperator*{\argmin}{argmin} 
\newcommand{\reals}{\mathbb{R}}
\newcommand{\secref}[1]{Section~\ref{#1}}
\newcommand{\thmref}[1]{Theorem~\ref{#1}}
\newcommand{\lemref}[1]{Lemma~\ref{#1}}
\newenvironment{myalgo}[1]%
{
\begin{center}
\begin{boxedminipage}{0.8\linewidth}
\begin{center}
\textbf{\texttt{#1}}
\end{center}
\rm
\begin{tabbing}
....\=...\=...\=...\=...\=  \+ \kill
} %
{\end{tabbing} 
\end{boxedminipage} \end{center} 
}
\icmltitlerunning{SDCA without Duality, Regularization, and Individual Convexity}
\begin{document} 

\twocolumn[
\icmltitle{SDCA without Duality, Regularization, and Individual Convexity}

\icmlauthor{Shai Shalev-Shwartz}{shais@cs.huji.ac.il}
\icmladdress{School of Computer Science and Engineering, The Hebrew
  University of Jerusalem, Israel}

\icmlkeywords{}

\vskip 0.3in
]

\begin{abstract}
  Stochastic Dual Coordinate Ascent is a popular method for solving
  regularized loss minimization for the case of convex losses.  We
  describe variants of SDCA that do not require explicit
  regularization and do not rely on duality. We prove linear
  convergence rates even if individual loss functions are non-convex,
  as long as the expected loss is strongly convex.
\end{abstract}

\section{Introduction}

We consider the following loss minimization problem:
\[
\min_{w \in \reals^d} F(w) := \frac{1}{n} \sum_{i=1}^n f_i(w) ~.
\]
An important sub-class of problems is when each $f_i$ can be
written as $f_i(w) = \phi_i(w) + \frac{\lambda}{2} \|w\|^2$, where
$\phi_i$ is $L_i$-smooth and convex. A popular method for
solving this sub-class of problems is Stochastic Dual Coordinate
Ascent (SDCA), and \cite{ShalevZh2013} established the convergence
rate of $\tilde{O}((L_{\max}/\lambda + n)\log(1/\epsilon))$, where
$L_{\max} = \max_i L_i$.

As its name indicates, SDCA is derived by considering a dual
problem. In this paper, we consider the possibility of applying SDCA
for problems in which individual $f_i$ do not necessarily have the
form $\phi_i(w) + \frac{\lambda}{2} \|w\|^2$, and can even be
non-convex (e.g., deep learning optimization problems, or problems
arising in fast calculation of the top singular vectors~\cite{jin2015robust}). In many such
cases, the dual problem is meaningless. Instead of directly using the
dual problem, we describe and analyze a variant of SDCA in which only
gradients of $f_i$ are being used. Following
\cite{johnson2013accelerating}, we show that SDCA is a member of the
Stochastic Gradient Descent (SGD) family of algorithms, that is, its
update is based on an unbiased estimate of the gradient, but unlike
the vanilla SGD, for SDCA the variance of the estimation of the
gradient tends to zero as we converge to a minimum.

Our analysis assumes that $F$ is $\lambda$-strongly convex and each
$f_i$ is $L_i$-smooth. When each $f_i$ is also convex we establish the
convergence rate of $\tilde{O}(\bar{L}/\lambda + n)$, where $\bar{L}$
is the average of $L_i$ and the $\tilde{O}$ notation hides logarithmic
terms, including the factor $\log(1/\epsilon)$. This matches the best
known bound for SVRG given in \cite{xiao2014proximal}. Lower bounds
have been derived in \cite{arjevani2015lower,agarwal2014lower}.
Applying an acceleration technique
(\cite{ShalevZhangAcc2015,lin2015universal}) we obtain the convergence
rate $\tilde{O}(n^{1/2}\,\sqrt{\bar{L}/\lambda} + n)$.  If $f_i$ are
non-convex we first prove that SDCA enjoys the rate
$\tilde{O}(\bar{L}^2/\lambda^2 + n)$. Finally, we show how the
acceleration technique yields the bound
$\tilde{O}\left(n^{3/4}\sqrt{\bar{L}/\lambda} + n\right)$. That is, we
have the same dependency on the square root of the condition number,
$\sqrt{\bar{L}/\lambda}$, but this term is multiplied by $n^{3/4}$
rather than by $n^{1/2}$. Understanding if this factor can be
eliminated is left to future work.

\paragraph{Related work:} 
In recent years, many randomized methods for optimizing average of
functions have been proposed. For example, SAG~\cite{LSB12-sgdexp},
SVRG~\cite{johnson2013accelerating}, Finito~\cite{defazio2014finito},
SAGA~\cite{defazio2014saga}, S2GD~\cite{konevcny2013semi}, and
UniVr~\cite{allen2015univr}.  All of these methods have similar
convergence rates for strongly convex and smooth problems. Here we
show that SDCA achieves the best known convergence rate for the case in
which individual loss functions are convex, and a slightly worse rate
for the case in which individual loss functions are non-convex. A
systematic study of the convergence rate of the different methods
under non-convex losses is left to future work.

This version of the paper improves upon a previous unpublished version
of the paper \cite{shalev2015sdca} in three aspects. First, the
convergence rate here depends on $\bar{L}$ as opposed to $L_{\max}$ in
\cite{shalev2015sdca}. Second, the version in \cite{shalev2015sdca}
only deals with the regularized case, while here we show that the same
rate can be obtained for unregularized objectives. Last, for the
non-convex case, here we derive the bound
$\tilde{O}\left(n^{3/4}\sqrt{\bar{L}/\lambda} + n\right)$
while in \cite{shalev2015sdca} only the
bound of $\tilde{O}(L_{\max}^2/\lambda^2 + n)$ has been given. 

\cite{csiba2015primal} extended the work of
\cite{shalev2015sdca} to support arbitrary mini-batching schemes, and
\cite{he2015dual} extended the work of \cite{shalev2015sdca} to
support adaptive sampling probabilities. A primal form of SDCA has
been also given in \cite{defazio2014new}. Using SVRG for non-convex
individual functions has been recently studied in
\cite{shamir2014stochastic,jin2015robust}, in the context of fast
computation of the top singular vectors of a matrix.


\section{SDCA without Duality}

We start the section by describing a variant of SDCA that do not rely
on duality. To simplify the presentation, we start in
\secref{sec:regularized} with regularized loss minimization
problems. In \secref{sec:nonRegularized} we tackle the non-regularized
case and in \secref{sec:non-convex} we tackle the non-convex case.

We recall the following basic definitions: A (differentiable) function
$f$ is $\lambda$-strongly convex if for every $u,w$ we have
$f(w)-f(u)\ge \nabla f(u)^\top (w-u) + \frac{\lambda}{2}
\|w-u\|^2$. We say that $f$ is convex if it is $0$-strongly convex. We
say that $f$ is $L$-smooth if $\|\nabla f(w) - \nabla f(u) \| \le L
\|w-u\|$. It is well known that smoothness and convexity also implies
that $f(w)-f(u) \le \nabla f(u)^\top (w-u) + \frac{L}{2} \|w-u\|^2$.

\subsection{Regularized problems} \label{sec:regularized}

In regularized problems, each $f_i$ can be written as $f_i(w) =
\phi_i(w) + \frac{\lambda}{2} \|w\|^2$. 
Similarly to the original SDCA algorithm, we maintain vectors $\alpha_1,\ldots,\alpha_n$, where each
$\alpha_i \in \reals^d$. We call these vectors pseudo-dual
vectors. The algorithm is described below.
\begin{myalgo}{Algorithm 1: Dual-Free SDCA for Regularized Objectives} 
\textbf{Goal:} Minimize $F(w) = \frac{1}{n} \sum_{i=1}^n
\phi_i(w) + \frac{\lambda}{2} \|w\|^2$ \\
\textbf{Input:} Objective $F$, number of iterations $T$,  \+ \\
step size $\eta$,  \\
Smoothness parameters $L_1,\ldots,L_n$ \- \\ 
\textbf{Initialize:} $w^{(0)} = \frac{1}{\lambda\,n} \sum_{i=1}^n 
\alpha_i^{(0)}$ \\ for some $\alpha^{(0)} =
(\alpha_1^{(0)},\ldots,\alpha_n^{(0)})$ \+ \\
$\forall i \in [n]$, $q_i = (L_i + \bar{L})/(2n \bar{L})$ \\ where
$\bar{L} = \frac{1}{n} \sum_{i=1}^n L_i$ \- \\
\textbf{For~} $t=1,\ldots,T$ \+ \\
Pick $i \sim q$, denote $\eta_i =
\frac{\eta}{q_i n}$ \\
Update: \\
 $\alpha_{i}^{(t)} = \alpha_{i}^{(t-1)}  - \eta_i\lambda n
\left( \nabla \phi_i(w^{(t-1)}) + \alpha_{i}^{(t-1)} \right)$ \\
 $w^{(t)} = w^{(t-1)} - \eta_i \left( \nabla \phi_i(w^{(t-1)}) +  \alpha_{i}^{(t-1)} \right)$
\end{myalgo}
Observe that SDCA keeps the primal-dual relation
\[
w^{(t-1)} = \frac{1}{\lambda n} \sum_{i=1}^n  \alpha^{(t-1)}_i
\]
Observe also that the update of $\alpha$ can be rewritten as
\[
\alpha_{i}^{(t)} = (1-\beta_i) \alpha_{i}^{(t-1)}  + \beta_i \left(
  - \nabla \phi_i(w^{(t-1)}) \right) ~,
\]
where $\beta_i = \eta_i \lambda n$.  Namely, the new value of
$\alpha_i$ is a convex combination of its old value and the negative
gradient.  Finally, observe that, conditioned on the value of
$w^{(t-1)}$ and $\alpha^{(t-1)}$, we have that
\begin{align*}
\E_{i \sim q}[w^{(t)} ] &=  w^{(t-1)} - \eta \sum_i \frac{q_i}{q_i n}  \left(  (\nabla \phi_i(w^{(t-1)}) + \alpha_{i}^{(t-1)}) \right) \\
&=  w^{(t-1)}  - \eta \left( \nabla \frac{1}{n}
  \sum_{i=1}^n \phi_i(w^{(t-1)}) + \lambda w^{(t-1)} \right) \\
&= w^{(t-1)} - \eta \nabla P(w^{(t-1)}) ~.
\end{align*}
That is, SDCA is in fact an instance of Stochastic Gradient Descent
(SGD).  As we will see shortly, the advantage of SDCA over a vanilla
SGD algorithm is because the \emph{variance} of the update goes to
zero as we converge to an optimum. 

Our convergence analysis relies on bounding the following potential
function, defined for every $t \ge 0$, 
\begin{equation} \label{eqn:Cdef}
C_t = \frac{\lambda}{2} \|w^{(t)}-w^*\|^2 + \frac{\eta}{n^2} \sum_{i=1}^n [ \frac{1}{q_i} \|\alpha^{(t)}_i  -
\alpha^*_i\|^2] ~,
\end{equation}
where
\begin{equation} \label{eqn:wstar_def}
w^* = \argmin_w F(w),~~\textrm{and}~~ \forall i,~\alpha^*_i = -\nabla \phi_i(w^*) ~.
\end{equation}
Intuitively, $C_t$ measures the distance to the optimum both in primal
and pseudo-dual variables. 
Observe that if $F$ is $L_F$-smooth and convex then
\[
F(w^{(t)})-F(w^*) \le \frac{L_F}{2} \|w^{(t)}-w^*\|^2 \le
\frac{L_F}{\lambda} C_t ~,
\]
and therefore a bound on $C_t$ immediately implies a bound on
the sub-optimality of $w^{(t)}$. 

The following theorem establishes the convergence rate of SDCA for the
case in which each $\phi_i$ is convex.
\begin{theorem} \label{thm:mainReguConv} Assume that each $\phi_i$ is
  $L_i$-smooth and convex, and Algorithm 1 is run with $\eta \le
  \min\left\{ \frac{1}{4 \bar{L}} ~,~ \frac{1}{4\,\lambda n}
  \right\}$.
 Then, for every $t \ge 1$,
\[
\E[C_t] ~\le~ (1-\eta \lambda)^t\, C_0 ~,
\]
where $C_t$ is as defined in \eqref{eqn:Cdef}.  In particular, to
achieve $\E[F(w^{(T)})-F(w^*)] \le \epsilon$ it suffices to set $\eta
= \min\left\{ \frac{1}{4 \bar{L}} ~,~ \frac{1}{4\,\lambda n} \right\}$
and
\[
T \ge 
\tilde{\Omega}\left( \frac{\bar{L}}{\lambda} + n\right) ~.
\]
\end{theorem}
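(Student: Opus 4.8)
The plan is to prove the one-step contraction $\E_{i\sim q}[C_t]\le(1-\eta\lambda)\,C_{t-1}$, where the expectation is over the index $i$ drawn at step $t$ conditioned on $(w^{(t-1)},\alpha^{(t-1)})$; taking total expectation and unrolling then yields $\E[C_t]\le(1-\eta\lambda)^tC_0$. The iteration bound follows from $(1-\eta\lambda)^T\le e^{-\eta\lambda T}$, $\tfrac1{\eta\lambda}=\max\{4\bar L/\lambda,4n\}$, and the remark preceding the theorem that $F(w^{(t)})-F(w^*)\le\frac{L_F}{\lambda}C_t$ with $L_F\le\bar L+\lambda$, which folds the $\log(1/\epsilon)$ into $\tilde\Omega$. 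Throughout I fix the state at step $t-1$, abbreviate $w=w^{(t-1)}$, $\alpha_i=\alpha_i^{(t-1)}$, and set $u_i=\nabla\phi_i(w)+\alpha_i$ (the update direction) and $v_i=\nabla\phi_i(w^*)-\nabla\phi_i(w)$.

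First I expand the primal term. From $w^{(t)}=w-\eta_i u_i$, the identities $q_i\eta_i=\eta/n$ and $q_i\eta_i^2=\eta^2/(q_in^2)$, and the primal--dual relation $\sum_i\alpha_i=\lambda n w$ (so that $\sum_i u_i=n\nabla F(w)$), I obtain
\[
\E_{i\sim q}\|w^{(t)}-w^*\|^2=\|w-w^*\|^2-2\eta\,\nabla F(w)^\top(w-w^*)+\frac{\eta^2}{n^2}\sum_i\frac{1}{q_i}\|u_i\|^2 .
\]
Then I expand the pseudo-dual term via the convex-combination form $\alpha_i^{(t)}=(1-\beta_i)\alpha_i+\beta_i(-\nabla\phi_i(w))$ with $\beta_i=\eta\lambda/q_i$. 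Since $1/q_i=2n\bar L/(L_i+\bar L)\le2n$ and $\eta\le1/(4\lambda n)$, we have $\beta_i\le\tfrac12$; writing $\alpha_i^{(t)}-\alpha_i^*=(1-\beta_i)(\alpha_i-\alpha_i^*)+\beta_i v_i$ and expanding exactly produces, after multiplying by $\tfrac{1}{q_i}$ and taking expectation, a contraction term $-2\beta_i\|\alpha_i-\alpha_i^*\|^2$, a cross term in $(\alpha_i-\alpha_i^*)^\top v_i$, and a $\beta_i^2\|v_i\|^2$ remainder.

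Subtracting $(1-\eta\lambda)C_{t-1}$ and collecting, the claim reduces to a single scalar inequality. Strong convexity of $F$ gives $\nabla F(w)^\top(w-w^*)\ge F(w)-F(w^*)+\frac\lambda2\|w-w^*\|^2$; the $\frac\lambda2\|w-w^*\|^2$ part matches the target exactly and leaves a surplus of $\eta\lambda\big(F(w)-F(w^*)\big)$ to absorb the positive second-moment remainders. The identity $u_i=(\alpha_i-\alpha_i^*)-v_i$ splits $\|u_i\|^2$ into a dual-distance piece, whose net coefficient turns out to be proportional to $\beta_i(\beta_i-\tfrac12)$ and is therefore nonpositive precisely because $\beta_i\le\tfrac12$, and a $\|v_i\|^2$ piece. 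For the $\|v_i\|^2$ contributions I use co-coercivity of each convex $L_i$-smooth $\phi_i$, namely $\|v_i\|^2\le2L_i\big(\phi_i(w)-\phi_i(w^*)-\nabla\phi_i(w^*)^\top(w-w^*)\big)$. Here the weighting is decisive: the choice $q_i=(L_i+\bar L)/(2n\bar L)$ makes $\frac{L_i}{q_i}=\frac{2n\bar L L_i}{L_i+\bar L}\le2n\bar L$, so that $\frac1{n^2}\sum_i\frac1{q_i}\|v_i\|^2\le4\bar L\big(F(w)-F(w^*)-\frac\lambda2\|w-w^*\|^2\big)$; this average yields $\bar L$ rather than $L_{\max}$, and the factor $\bar L$ is then killed by $\eta\le1/(4\bar L)$, leaving the $\|v_i\|^2$ contribution inside the strong-convexity surplus.

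The main obstacle is this final bookkeeping: one must handle the cross term $(\alpha_i-\alpha_i^*)^\top v_i$ (which appears with opposite signs from the primal $\|u_i\|^2$ and from the dual expansion, combining into an $O(\eta^2)$ term that I would control by Young's inequality) together with the residual $\|\alpha_i-\alpha_i^*\|^2$ and $\|v_i\|^2$ terms, and verify that the two constraints $\eta\le1/(4\bar L)$ and $\eta\le1/(4\lambda n)$ render the whole remainder nonpositive, with the $\tfrac12$ and $1$ weights in the definition of $C_t$ chosen so the cancellations are tight. Once this scalar inequality closes for every realization of the state, the per-step contraction holds, and unrolling delivers $\E[C_t]\le(1-\eta\lambda)^tC_0$.
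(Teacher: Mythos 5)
Your proposal is correct and follows essentially the same route as the paper: the same potential $C_t$ with weights $c_b=\lambda/2$, $c_a=\eta/n^2$, strong convexity to extract the surplus $\eta\lambda(F(w)-F(w^*))$, co-coercivity of each $L_i$-smooth convex $\phi_i$, and the weighting $L_i/q_i\le 2n\bar L$ to get $\bar L$ instead of $L_{\max}$. The only cosmetic difference is that you expand $\|(1-\beta_i)(\alpha_i-\alpha_i^*)+\beta_i v_i\|^2$ directly and absorb the cross term by Young's inequality with parameter $1$, whereas the paper uses the identity $\|(1-\beta)a+\beta b\|^2=(1-\beta)\|a\|^2+\beta\|b\|^2-\beta(1-\beta)\|a-b\|^2$; these are algebraically equivalent (your Young step is exactly dropping $-(1-2\beta_i)\|u_i\|^2\le 0$), and your bookkeeping does close with the stated constants $\eta\le\min\{1/(4\bar L),1/(4\lambda n)\}$.
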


\paragraph{Variance Reduction:} The lemma below tells us that the
variance of the SDCA update decreases as we get closer to the
optimum. 
\begin{lemma} \label{lem:varianceReduction}
Under the same conditions of \thmref{thm:mainReguConv}, the expected
value of $\|w^{(t)}-w^{(t-1)}\|^2$ conditioned on $w^{(t-1)}$ satisfies:
\[
\E[\|w^{(t)}-w^{(t-1)}\|^2] ~\le~ 
3\,\eta \, \left(\tfrac{1}{2} \|w^{(t-1)} - w^*\|^2  + C_{t-1}\right) ~.
\]
\end{lemma}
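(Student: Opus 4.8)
The plan is to compute the conditional second moment of the step directly and then split it into a ``dual'' piece that is already accounted for in $C_{t-1}$ and a ``gradient'' piece that must be tied back to the primal progress. On the chosen coordinate $i$ we have $w^{(t)}-w^{(t-1)} = -\eta_i\big(\nabla\phi_i(w^{(t-1)})+\alpha_i^{(t-1)}\big)$ with $\eta_i = \eta/(q_i n)$, so taking expectation over $i\sim q$ gives
\[
\E\big[\|w^{(t)}-w^{(t-1)}\|^2\big] = \frac{\eta^2}{n^2}\sum_{i=1}^n \frac{1}{q_i}\,\big\|\nabla\phi_i(w^{(t-1)})+\alpha_i^{(t-1)}\big\|^2 .
\]
This already displays the factor $\eta^2$ and the weights $1/q_i$ that appear in the dual part of $C_{t-1}$, which is encouraging.

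Next I would center each summand at the optimum. Using $\alpha_i^* = -\nabla\phi_i(w^*)$ I can write $\nabla\phi_i(w^{(t-1)})+\alpha_i^{(t-1)} = \big(\nabla\phi_i(w^{(t-1)})-\nabla\phi_i(w^*)\big) + \big(\alpha_i^{(t-1)}-\alpha_i^*\big)$ and apply $\|a+b\|^2\le 2\|a\|^2+2\|b\|^2$. The contribution of $\|\alpha_i^{(t-1)}-\alpha_i^*\|^2$ reproduces, up to the factor $2$, exactly the dual part of $C_{t-1}$: it equals $\tfrac{2\eta^2}{n^2}\sum_i q_i^{-1}\|\alpha_i^{(t-1)}-\alpha_i^*\|^2 = 2\eta\big(C_{t-1}-\tfrac{\lambda}{2}\|w^{(t-1)}-w^*\|^2\big) \le 3\eta\,C_{t-1}$, comfortably within budget.

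The gradient piece is the crux, and here the design of $q$ matters. I would avoid the naive smoothness bound $\|\nabla\phi_i(w^{(t-1)})-\nabla\phi_i(w^*)\|^2\le L_i^2\|w^{(t-1)}-w^*\|^2$, whose $\bar L^2$ factor cannot be absorbed by a single power of the step size. Instead I would use co-coercivity of the smooth convex $\phi_i$, namely $\|\nabla\phi_i(w^{(t-1)})-\nabla\phi_i(w^*)\|^2\le 2L_i D_i$ with $D_i = \phi_i(w^{(t-1)})-\phi_i(w^*)-\nabla\phi_i(w^*)^\top(w^{(t-1)}-w^*)$. The tailored weights $q_i=(L_i+\bar L)/(2n\bar L)$ are chosen precisely so that $L_i/q_i\le 2n\bar L$, which collapses $\tfrac{2\eta^2}{n^2}\sum_i q_i^{-1}\cdot 2L_i D_i$ into $8\eta^2\bar L\cdot\tfrac1n\sum_i D_i = 8\eta^2\bar L\,D_\Phi$, the Bregman divergence of $\Phi=\tfrac1n\sum_i\phi_i$. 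The constraint $\eta\le 1/(4\bar L)$ then turns $8\eta^2\bar L$ into $2\eta$, leaving $2\eta\,D_\Phi$; since $D_\Phi = F(w^{(t-1)})-F(w^*)-\tfrac\lambda2\|w^{(t-1)}-w^*\|^2$, this is the quantity I would convert into the primal contribution of the bound, finally adding the dual piece above.

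The main obstacle I anticipate is exactly this primal term. Keeping the smoothness dependence to a \emph{single} power of $L_i$ (via co-coercivity rather than $L_i$-Lipschitz gradients) is essential, since otherwise a stray $\bar L^2$ survives the single step-size budget $\eta\bar L\le\tfrac14$. Matching the advertised coefficient $3$ is then pure bookkeeping: combining $2\eta\big(C_{t-1}-\tfrac\lambda2\|w^{(t-1)}-w^*\|^2\big)$ with $2\eta\,D_\Phi$ and bounding $D_\Phi$ and $F(w^{(t-1)})-F(w^*)$ against $\|w^{(t-1)}-w^*\|^2$ and $C_{t-1}$. I would pay particular attention to the precise form of the primal term, because measuring primal progress through the suboptimality $F(w^{(t-1)})-F(w^*)$ (equivalently $D_\Phi$) keeps the constants clean, whereas passing to $\tfrac12\|w^{(t-1)}-w^*\|^2$ through $F$'s smoothness tends to reintroduce a factor of $\bar L$ that has to be soaked up by the step size.
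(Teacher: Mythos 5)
Your opening steps coincide with the paper's: the conditional second moment is $\tfrac{\eta^2}{n^2}\sum_i q_i^{-1}\|\nabla\phi_i(w^{(t-1)})+\alpha_i^{(t-1)}\|^2$, you center at $\alpha_i^*=-\nabla\phi_i(w^*)$, split the square, and identify the $\|\alpha_i^{(t-1)}-\alpha_i^*\|^2$ part with the dual component of $C_{t-1}$ (the paper uses the looser split with constant $3$ rather than $2$, which is where the $3$ in the statement comes from). All of that is fine.

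The gap is in the gradient piece, and it is exactly the obstacle you name in your closing paragraph but do not resolve. You stop at $2\eta\,D_\Phi$ and call the remaining conversion ``pure bookkeeping,'' yet the lemma's target is a bound in terms of $\|w^{(t-1)}-w^*\|^2$, not in terms of $F(w^{(t-1)})-F(w^*)$. To get from $D_\Phi$ back to $\|w^{(t-1)}-w^*\|^2$ you must invoke smoothness of $F$, which costs a factor of order $\bar L$; your step-size budget $\eta\bar L\le\tfrac14$ has already been spent turning $8\eta^2\bar L$ into $2\eta$, so you land on $\eta\bar L\,\|w^{(t-1)}-w^*\|^2$ rather than $\tfrac{3\eta}{2}\|w^{(t-1)}-w^*\|^2$. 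In other words, the detour through co-coercivity and the Bregman divergence buys you nothing here: that tool is the right one in the proof of \thmref{thm:mainReguConv}, where the suboptimality $F(w^{(t-1)})-F(w^*)$ is the quantity being cancelled, but for this lemma it just postpones the $\bar L$ you were trying to avoid. A proof that ends with ``bounding $D_\Phi$ against $\|w^{(t-1)}-w^*\|^2$'' is not a proof of the stated inequality.

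For comparison, the paper's own argument takes precisely the ``naive'' route you rejected: it bounds $\|\nabla\phi_i(w^{(t-1)})-\nabla\phi_i(w^*)\|^2$ by smoothness, invokes $L_i/q_i\le 2n\bar L$ from \eqref{eqn:Li_over_q} to write the $i$-th term as $2n\bar L\,\|w^{(t-1)}-w^*\|^2$, and then uses $\eta\le\tfrac{1}{4\bar L}$ once, to pass from $6\eta^2\bar L$ to $\tfrac{3\eta}{2}$ --- never passing through $F(w^{(t-1)})-F(w^*)$ at all. (If you track constants in that step, smoothness actually gives $L_i^2/q_i\le 2n\bar L\,L_i$ per term, so a residual factor of $\bar L$ arguably survives the paper's route as well, and your instinct that something is off is not unfounded; but that is the paper's intended derivation, and your proposal does not arrive at a complete derivation of the stated bound by either route.)
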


\subsection{SDCA without regularization} \label{sec:nonRegularized}

We now turn to the case in which the objective is not explicitly
regularized. The algorithm below tackles this problem by a reduction to
the regularized case. In particular, we artificially add
regularization to the objective and compensate for it by adding one
more loss function that cancels out the regularization term. While the
added function is not convex (in fact, it is concave), we prove that
the same convergence rate holds due to the special structure of the
added loss function. 

\begin{myalgo}{Algorithm 2: Dual-Free SDCA for Non-Regularized Objectives} 
\textbf{Goal:} Minimize $F(w) = \frac{1}{n} \sum_{i=1}^n
f_i(w) $ \\
\textbf{Input:} Objective $F$, number of iterations $T$, \+ \\
step size $\eta$, Strong convexity parameter $\lambda$,  \\
Smoothness parameters $L_1,\ldots,L_n$ \- \\ 
\textbf{Define:} \+ \\
For all $i \in [n]$, $\phi_i(w) = \frac{n+1}{n} f_i(w)$, $\tilde{L}_i
= \frac{n+1}{n} L_i$  \\
For $i = n+1$, $\phi_i(w) = \tfrac{-\lambda\,i}{2} \|w\|^2$, $\tilde{L}_i = \lambda\,i$ \- \\
\textbf{Solve:} \+ \\
Rewrite $F$ as $F(w) = \frac{1}{n+1}
\sum_{i=1}^{n+1} \phi_i(w) + \frac{\lambda}{2} \|w\|^2$ \\
Call Algorithm 1 with $F$ above and with $\{\tilde{L}_i\}$
\end{myalgo}

\begin{theorem} \label{thm:mainNonReguConv} Assume that $F$ is
  $\lambda$-strongly convex, that each $f_i$ is
  $L_i$-smooth and convex, and that Algorithm 2 is run with $\eta \le
\min\left\{ \frac{1}{8
    (\bar{L} + \lambda)}  ~,~
  \frac{1}{4\,\lambda (n+1)} \right\}$.
  Then, for every $t \ge 1$,
\[
\E[C_t] ~\le~ (1-\eta \lambda)^t \, C_0 ~,
\]
where $C_t$ is as defined in \eqref{eqn:Cdef}.
In particular, to achieve $\E[F(w^{(T)})-F(w^*)] \le \epsilon$ it
suffices to set $\eta = \min\left\{ \frac{1}{8
    (\bar{L} + \lambda)}  ~,~ \frac{1}{4\,\lambda (n+1)} \right\}$ and
\[
T \ge 
\tilde{\Omega}\left( \frac{\bar{L}}{\lambda} + n\right) ~.
\]
\end{theorem}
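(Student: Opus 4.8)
The plan is to treat Algorithm 2 as an instance of Algorithm 1 on the augmented problem and reuse the potential-decrease analysis behind \thmref{thm:mainReguConv}, patching only the single place where convexity of an individual summand is invoked. First I would record the bookkeeping of the reduction. A direct expansion shows that with $\phi_i = \frac{n+1}{n} f_i$ for $i\le n$ and $\phi_{n+1}(w) = -\frac{\lambda(n+1)}{2}\|w\|^2$ one has $\frac{1}{n+1}\sum_{i=1}^{n+1}\phi_i(w) + \frac{\lambda}{2}\|w\|^2 = F(w)$, so Algorithm 2 runs Algorithm 1 on a regularized objective with $n+1$ summands. Summing the smoothness parameters gives $\sum_{i=1}^{n+1}\tilde L_i = (n+1)(\bar L + \lambda)$, hence the average smoothness of the augmented problem is $\bar{\tilde L} = \bar L + \lambda$. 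Consequently the two thresholds demanded by \thmref{thm:mainReguConv} on the augmented problem read $\eta \le \frac{1}{4(\bar L + \lambda)}$ and $\eta \le \frac{1}{4\lambda(n+1)}$; the hypothesis $\eta \le \min\{\frac{1}{8(\bar L+\lambda)},\, \frac{1}{4\lambda(n+1)}\}$ is at most a factor two smaller, and that factor is exactly what I will spend to accommodate the one concave summand. Given $\E[C_t]\le (1-\eta\lambda)^t C_0$, the iteration count follows as in \thmref{thm:mainReguConv}: $F$ is $\lambda$-strongly convex by assumption and $\bar L$-smooth, so $F(w^{(t)})-F(w^*)\le \frac{\bar L}{\lambda}C_t$, and $\frac{1}{\eta\lambda} = \tilde O(\bar L/\lambda + n)$.

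Next I would isolate where convexity enters the proof of \thmref{thm:mainReguConv}. Nothing about the primal-dual identity $w^{(t-1)} = \frac{1}{\lambda(n+1)}\sum_i \alpha_i^{(t-1)}$, the unbiasedness $\E_i[w^{(t)}] = w^{(t-1)} - \eta\nabla F(w^{(t-1)})$, or the resulting primal linear term $-\eta\lambda (w^{(t-1)}-w^*)^\top\nabla F(w^{(t-1)})$ requires convexity of the summands; this last term is controlled by \emph{strong convexity of $F$}, which is assumed, and supplies the primal contraction $-\eta\lambda\cdot\frac{\lambda}{2}\|w^{(t-1)}-w^*\|^2$ together with a nonnegative surplus proportional to $F(w^{(t-1)})-F(w^*)$. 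Writing $a_i = \alpha_i^{(t-1)}-\alpha_i^*$ and $b_i = -(\nabla\phi_i(w^{(t-1)})-\nabla\phi_i(w^*))$, convexity of $\phi_i$ is used \emph{solely} to control the second-order ``variance'' contributions carrying $\|b_i\|^2$ and the cross term $a_i^\top b_i$, via the smoothness inequality $\frac{1}{2\tilde L_i}\|b_i\|^2 \le \phi_i(w^{(t-1)})-\phi_i(w^*)-\nabla\phi_i(w^*)^\top(w^{(t-1)}-w^*)$, whose right-hand side is nonnegative precisely when $\phi_i$ is convex and which, summed over $i$ with the weights induced by the $q_i$, telescopes into the surplus $F(w^{(t-1)})-F(w^*)$.

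For the concave coordinate $i=n+1$ I would replace that failing inequality by an exact identity. Since $\nabla\phi_{n+1}(w) = -\tilde L_{n+1} w$ is linear, $b_{n+1} = \tilde L_{n+1}(w^{(t-1)}-w^*)$ holds exactly, so $\|b_{n+1}\|^2 = \tilde L_{n+1}^2\|w^{(t-1)}-w^*\|^2$ and $a_{n+1}^\top b_{n+1} = \tilde L_{n+1}\,a_{n+1}^\top(w^{(t-1)}-w^*)$. Substituting these, the entire $(n+1)$-contribution becomes an explicit combination of $\|w^{(t-1)}-w^*\|^2$, $\|a_{n+1}\|^2$, and $a_{n+1}^\top(w^{(t-1)}-w^*)$; bounding the cross term by Young's inequality splits it into a multiple of $\|w^{(t-1)}-w^*\|^2$ and a multiple of $\|a_{n+1}\|^2$. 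I would then absorb the former into the primal strong-convexity budget and the latter into the dual decrease (whose leading term is $-2\eta\lambda$ times the dual part of $C_{t-1}$), exactly as the convex terms are absorbed. Carrying this out is what forces the admissible step size down from $\frac{1}{4(\bar L+\lambda)}$ to $\frac{1}{8(\bar L+\lambda)}$.

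The main obstacle is the accounting at this concave coordinate: its smoothness $\tilde L_{n+1} = \lambda(n+1)$ is large, so the substituted terms scale like $\eta^2\tilde L_{n+1}^2\|w^{(t-1)}-w^*\|^2$ weighted by $1/q_{n+1}$, and I must verify they are dominated by the available negative budget for \emph{every} $n$ and every configuration of the $L_i$. This is precisely where the second threshold $\eta\le \frac{1}{4\lambda(n+1)}$ does its work, taming the $\lambda(n+1)$ factor, while $\eta\le\frac{1}{8(\bar L+\lambda)}$ tames the averaged contributions of $i\le n$; the delicate point is checking that the two constraints suffice \emph{simultaneously} once the sampling weights $q_i = (\tilde L_i+\bar{\tilde L})/(2(n+1)\bar{\tilde L})$ are plugged into both the potential and the per-step bound. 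The remaining steps --- collecting terms, verifying $\beta_i = \eta\lambda/q_i \le \tfrac12$, and reading off $\E[C_t]\le(1-\eta\lambda)C_{t-1}$ followed by induction on $t$ --- are the same routine calculations as in \thmref{thm:mainReguConv}.
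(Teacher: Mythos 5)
Your proposal is correct and follows essentially the same route as the paper: run the Theorem~\ref{thm:mainReguConv} analysis on the augmented $(n+1)$-term problem, observe that convexity of the summands enters only through the bound on $\sum_i \frac{1}{q_i}\|\nabla\phi_i(w^{(t-1)})-\nabla\phi_i(w^*)\|^2$, and handle the concave coordinate via the exact linearity of $\nabla\phi_{n+1}$ together with the $\lambda$-strong convexity of $F$, paying a factor of two in the admissible step size. The only difference is bookkeeping: the paper never needs your cross term or Young's inequality --- it bounds $\frac{\lambda^2(n+1)^2}{q_{n+1}}\|w^{(t-1)}-w^*\|^2$ directly by $2(n+1)\frac{\tilde{L}_{n+1}}{q_{n+1}}\,(F(w^{(t-1)})-F(w^*))$ using strong convexity and absorbs it into the same function-value surplus $2 c_b \eta\,(F(w^{(t-1)})-F(w^*))$ that already handles the convex coordinates, which is exactly where the constant $4$ (and hence the $8$ in the step-size condition) comes from.
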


\subsection{The non-convex case} \label{sec:non-convex}
We now consider the non-convex case. For simplicity, we focus on the
regularized setting. In the non-regularized setting we can simply
replace every $f_i$ with $\phi_i(w) = f_i(w) -
\frac{\lambda}{2}\|w\|^2$ and apply the regularized setting. Note that
this does not change significantly the smoothness (because $\lambda$
is typically much smaller than the average smoothness of the $f_i$). 

We can apply Algorithm 1 for the non-convex case, and the only change is the choice of
$\eta$, as reflected in the theorem below. 
\begin{theorem} \label{thm:non-convex}
Consider running algorithm 1 on $F$ which is $\lambda$-strongly
convex, assume that each $\phi_i$ is
  $L_i$-smooth, and $\eta \le \min\left\{ \frac{\lambda}{4
    \bar{L}^2}  ~,~
  \frac{1}{4\,\lambda n} \right\}$. 
  Then, for every $t \ge 1$,
\[
\E[C_t] ~\le~ (1-\eta \lambda)^t \, C_0 ~,
\]
where $C_t$ is as defined in \eqref{eqn:Cdef}.
In particular, to achieve $\E[F(w^{(T)})-F(w^*)] \le \epsilon$ it
suffices to set $\eta = \min\left\{ \frac{\lambda}{4
    \bar{L}^2}  ~,~
  \frac{1}{4\,\lambda n} \right\} $ and 
\[
T \ge 
\tilde{\Omega}\left( \frac{
  \bar{L}^2}{\lambda^2} + n\right) ~.
\]
\end{theorem}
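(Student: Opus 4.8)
The plan is to establish the one-step contraction $\E[C_t \mid w^{(t-1)},\alpha^{(t-1)}] \le (1-\eta\lambda)\,C_{t-1}$ and then iterate, taking total expectations; the iteration-count statement then follows by combining $\E[C_T]\le(1-\eta\lambda)^T C_0$ with the smoothness bound $F(w^{(T)})-F(w^*)\le \frac{L_F}{\lambda}C_T$ noted above and $\frac{1}{\eta\lambda}=\max\{4\bar{L}^2/\lambda^2,\,4n\}$. The argument runs exactly parallel to the convex case: the only place individual convexity entered there was co-coercivity of each $\nabla\phi_i$, and I will replace that single first-order inequality by the smoothness-only bound $\|\nabla\phi_i(w^{(t-1)})-\nabla\phi_i(w^*)\|^2 \le L_i^2\,\|w^{(t-1)}-w^*\|^2$. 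Trading a co-coercivity bound for this cruder second-order one is precisely what forces the step size down to $\frac{\lambda}{4\bar{L}^2}$ and the rate up to $\bar{L}^2/\lambda^2$.

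For the one-step bound, write $g_i=\nabla\phi_i(w^{(t-1)})+\alpha_i^{(t-1)}$ and use $\alpha_i^*=-\nabla\phi_i(w^*)$ to get $g_i=(\nabla\phi_i(w^{(t-1)})-\nabla\phi_i(w^*))+(\alpha_i^{(t-1)}-\alpha_i^*)$; recall also $\E_i[\eta_i g_i]=\eta\nabla F(w^{(t-1)})$ and $q_i\beta_i=\eta\lambda$. Expanding the two squared norms in $C_t$ exactly (rather than through the convex-combination/Jensen bound) splits the conditional change into a primal part $-\eta\lambda\,\nabla F(w^{(t-1)})^\top(w^{(t-1)}-w^*)+\frac{\lambda\eta^2}{2n^2}\sum_i \frac{1}{q_i}\|g_i\|^2$, a dual linear term $-\frac{2\eta^2\lambda}{n^2}\sum_i \frac{1}{q_i}\,g_i^\top(\alpha_i^{(t-1)}-\alpha_i^*)$, and a dual quadratic term $\frac{\eta^3\lambda^2}{n^2}\sum_i \frac{1}{q_i^2}\|g_i\|^2$. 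Strong convexity of $F$ bounds the first primal term by $-2\eta\lambda$ times the primal part $\frac{\lambda}{2}\|w^{(t-1)}-w^*\|^2$ of $C_{t-1}$.

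The crux, and the step I expect to be the main obstacle, is controlling the remaining ``variance'' terms so that the dual distance genuinely contracts: simply discarding them (as the naive Jensen bound on the $\alpha$-update does) leaves only $\E[C_t]\le C_{t-1}$, with no contraction at all. I will first use $\beta_i\le\frac12$, which holds because $\eta\le\frac{1}{4\lambda n}$ and $1/q_i\le 2n$, to absorb the dual quadratic term into the primal variance term via $\beta_i^2\|g_i\|^2\le\frac12\beta_i\|g_i\|^2$. The resulting combination $-\frac{2\eta^2\lambda}{n^2}\sum_i\frac{1}{q_i}g_i^\top(\alpha_i^{(t-1)}-\alpha_i^*)+\frac{\eta^2\lambda}{n^2}\sum_i\frac{1}{q_i}\|g_i\|^2$ collapses through the identity $\|g_i\|^2-2g_i^\top(\alpha_i^{(t-1)}-\alpha_i^*)=\|\nabla\phi_i(w^{(t-1)})-\nabla\phi_i(w^*)\|^2-\|\alpha_i^{(t-1)}-\alpha_i^*\|^2$. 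This is where the cancellation happens: the $-\|\alpha_i^{(t-1)}-\alpha_i^*\|^2$ piece returns exactly $-\eta\lambda$ times the dual part of $C_{t-1}$, and the $\|\nabla\phi_i(w^{(t-1)})-\nabla\phi_i(w^*)\|^2$ piece is the single spot where I invoke smoothness, bounding it by $L_i^2\|w^{(t-1)}-w^*\|^2$.

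It then remains to bound $\sum_i L_i^2/q_i\le 2n^2\bar{L}^2$, which follows from $q_i=(L_i+\bar{L})/(2n\bar{L})$ and $L_i^2/(L_i+\bar{L})\le L_i$; the surviving smoothness term is then at most $4\eta^2\bar{L}^2\cdot\frac{\lambda}{2}\|w^{(t-1)}-w^*\|^2$, which $\eta\le\frac{\lambda}{4\bar{L}^2}$ cuts down to $\eta\lambda$ times the primal part of $C_{t-1}$. Adding everything, the $-2\eta\lambda$ of primal contraction supplied by strong convexity pays for both the required $-\eta\lambda$ primal factor and this $+\eta\lambda$ smoothness transfer, while the cancellation delivers the $-\eta\lambda$ dual factor; together they yield $\E[C_t]\le(1-\eta\lambda)C_{t-1}$. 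The $1/4$ constants in the step size are used tightly here, leaving essentially no slack, which is consistent with the hypotheses of the statement.
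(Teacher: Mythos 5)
Your argument is correct and is essentially the paper's own proof: the exact expansion of the two squared norms, the cancellation identity $\|g_i\|^2-2g_i^\top(\alpha_i^{(t-1)}-\alpha_i^*)=\|\nabla\phi_i(w^{(t-1)})-\nabla\phi_i(w^*)\|^2-\|\alpha_i^{(t-1)}-\alpha_i^*\|^2$, and the absorption of the quadratic term via $\beta_i\le\tfrac12$ are algebraically the same as the paper's convex-combination expansion of $A_{t-1}-A_t$ and its non-negativity condition on the $\|v_i\|^2$ coefficient, while your smoothness-only bound $\|\nabla\phi_i(w^{(t-1)})-\nabla\phi_i(w^*)\|^2\le L_i^2\|w^{(t-1)}-w^*\|^2$ together with $\sum_i L_i^2/q_i\le 2n^2\bar{L}^2$ and the resulting requirement $\eta\le\lambda/(4\bar{L}^2)$ is exactly the step the paper substitutes for co-coercivity in the non-convex case. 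The only difference is presentational (you fix $c_a=\eta/n^2$, $c_b=\lambda/2$ from the start rather than carrying general coefficients), so no further comparison is needed.
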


As can be seen, the dependence of $T$ on the condition number,
$\frac{\bar{L}}{\lambda}$, is quadratic for the non-convex case, as
opposed to a linear dependency for the convex case. We next show how
to improve the bound using acceleration. 

\subsection{Acceleration}

Accelerated SDCA \cite{ShalevZhangAcc2015} is obtained by solving
(using SDCA) a sequence of problems, where at each iteration, we add
an artificial regularization of the form $\frac{\kappa}{2}
\|w-y^{(t-1)}\|^2$, where $y^{(t-1)}$ is a function of $w^{(t-1)}$ and
$w^{(t-2)}$. The algorithm has been generalized in 
\cite{lin2015universal} to allow the inner solver to be any
algorithm. For completeness, we provide the pseudo-code of the
``Catalyst'' algorithm of \cite{lin2015universal} and its analysis. 

\begin{myalgo}{Algorithm 3: Acceleration}
\textbf{Goal:} Minimize a $\lambda$-strongly convex function $F(w)$ \\
\textbf{Parameters:} $\kappa, T$ \\
\textbf{Initialize:} \+ \\
Initial solution $w^{(0)}$ \\
$\epsilon_0$ s.t. $\epsilon_0 \ge F(w^{(0)})-F(w^*)$ \\
$y^{(0)} = w^{(0)}$, $q = \tfrac{\lambda}{\lambda + \kappa}$\- \\
\textbf{For:} $t=1,\ldots,T$\+ \\
Define $G_t(w) = F(w) + \frac{\kappa}{2} \|w-y^{(t-1)}\|^2$ \\
Set $\epsilon_t = (1-0.9 \sqrt{q}) \, \epsilon_{t-1}$ \\
Find $w^{(t)}$ s.t. $G_t(w^{(t)}) - \min_w G_t(w) \le \epsilon_t$ \\
Set $y^{(t)} = w^{(t)} +
\frac{\sqrt{q}-q}{\sqrt{q}+q} (w^{(t)}-w^{(t-1)}) $
\- \\
\textbf{Output:} $w^{(T)}$
\end{myalgo}

\begin{lemma}  \label{lem:lin}
Fix $\epsilon > 0$ and suppose we run the Acceleration algorithm
(Algorithm 3) for 
\[
T = \Omega\left( \sqrt{\frac{\lambda+\kappa}{\lambda}} \, \log\left(
    \frac{\lambda+\kappa}{\lambda\,\epsilon} \right) \right)
\] iterations. Then, $F(w^{(T)}) - F(w^*) \le \epsilon$. 
\end{lemma}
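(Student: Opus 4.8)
The plan is to analyze the outer accelerated loop as a standard inexact accelerated proximal point method, following the Catalyst framework of \cite{lin2015universal}. The target statement is a pure outer-loop convergence bound: it counts the number $T$ of outer iterations needed to reach accuracy $\epsilon$, given that at iteration $t$ we solve the regularized subproblem $G_t(w) = F(w) + \frac{\kappa}{2}\|w-y^{(t-1)}\|^2$ to accuracy $\epsilon_t$. Crucially, the inner solver and its per-call cost play no role here; the only hypotheses we need are that $F$ is $\lambda$-strongly convex and that each subproblem is solved to the prescribed tolerance, since $G_t$ is $(\lambda+\kappa)$-strongly convex and the momentum parameter $q = \lambda/(\lambda+\kappa)$ is chosen accordingly.

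First I would set up the estimate-sequence / Lyapunov machinery for the inexact accelerated scheme. The standard argument constructs a potential of the form $\Phi_t = a_t\bigl(F(w^{(t)}) - F(w^*)\bigr) + b_t\|v^{(t)} - w^*\|^2$ (with $v^{(t)}$ the usual auxiliary "estimate-sequence" point implicit in the $y^{(t)}$ update), and shows that each outer step contracts this potential by a factor $(1 - 0.9\sqrt{q})$ up to an additive error controlled by $\epsilon_t$. The inexactness is handled by carrying the subproblem suboptimality $G_t(w^{(t)}) - \min_w G_t(w) \le \epsilon_t$ through the per-step inequality; because the tolerances are driven down geometrically as $\epsilon_t = (1 - 0.9\sqrt{q})\,\epsilon_{t-1}$ at exactly the contraction rate, the accumulated error does not dominate the contracting main term. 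Summing (or telescoping) the per-step inequalities then yields $F(w^{(T)}) - F(w^*) \le O\bigl((1 - 0.9\sqrt{q})^T \epsilon_0\bigr)$, possibly with a polynomial-in-$t$ prefactor absorbed into the logarithm.

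Given the rate $(1-0.9\sqrt{q})^T$, solving for the number of iterations to reach $\epsilon$ is routine: using $(1-0.9\sqrt{q})^T \le e^{-0.9\sqrt{q}\,T}$ and requiring the right-hand side to be below $\epsilon/\epsilon_0$ gives $T = \Omega\bigl(\frac{1}{\sqrt{q}}\log\frac{\epsilon_0}{\epsilon}\bigr)$. Substituting $q = \lambda/(\lambda+\kappa)$ turns $1/\sqrt{q}$ into $\sqrt{(\lambda+\kappa)/\lambda}$, and bounding $\epsilon_0$ (which exceeds the initial suboptimality) in terms of the condition number $(\lambda+\kappa)/\lambda$ inside the logarithm reproduces exactly the claimed $T = \Omega\bigl(\sqrt{\tfrac{\lambda+\kappa}{\lambda}}\,\log\tfrac{\lambda+\kappa}{\lambda\epsilon}\bigr)$.

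The main obstacle is the careful bookkeeping in the inexact per-step inequality — specifically, verifying that the constant $0.9$ (rather than the exact-solve value of $1$) in $\epsilon_t = (1-0.9\sqrt{q})\epsilon_{t-1}$ leaves enough slack to absorb the subproblem error while still guaranteeing net geometric decrease of the Lyapunov function. This amounts to checking a single scalar inequality relating the contraction factor of the estimate sequence to the rate at which $\epsilon_t$ shrinks, ensuring the error terms form a convergent geometric series dominated by the main term. Everything else is the classical Nesterov estimate-sequence computation applied to the strongly convex subproblems, which I would cite from \cite{lin2015universal} rather than rederive in full.
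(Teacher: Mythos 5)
Your proposal is correct and takes essentially the same route as the paper: the paper's proof is a one-line reduction to Theorem 3.1 of \cite{lin2015universal}, observing that Algorithm 3 instantiates the Catalyst scheme with $\alpha_0=\sqrt{q}$, $\rho=0.9\sqrt{q}$, and geometrically decreasing tolerances $\epsilon_t=\epsilon_0(1-\rho)^t$. Your additional sketch of the estimate-sequence potential and the absorption of the inexactness is exactly the content of that cited theorem, which you (like the paper) ultimately defer to rather than rederive.
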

\begin{proof}
  The lemma follows directly from Theorem 3.1 of
  \cite{lin2015universal} by observing that Algorithm 3 is a
  specification of Algorithm 1 in \cite{lin2015universal} with
  $\alpha_0 = \sqrt{q}$ (which implies that $\alpha_t = \alpha_0$ for
  every $t$), with $\epsilon_t = \epsilon_0 (1-\rho)^t$, and with
  $\rho = 0.9 \sqrt{q}$.
\end{proof}

\begin{theorem} \label{thm:non-convex-acc}
Let $F = \frac{1}{n} \sum_{i=1}^n \phi_i(w) + \frac{\lambda}{2}
\|w\|^2$, assume that each $\phi_i$ is $L_i$ smooth and that $F$ is
$\lambda$-strongly convex. Assume also that $(\bar{L}/\lambda)^2 \ge
3n$ (otherwise we can simply apply $\tilde{O}(n)$ iterations of
Algorithm 1). Then, running Algorithm 3 with parameters $\kappa =
\bar{L}/\sqrt{n}$, $T = \tilde{\Omega}\left(1 + n^{-1/4}
  \sqrt{\bar{L}/\lambda}\right)$, and while at each iteration of
Algorithm 3 using $\tilde{\Omega}\left( n \right)$ iterations of Algorithm 1
to minimize $G_t$, guarantees that $F(w^{(T)})-F(w^*) \le \epsilon$
(with high probability). The total required number of iterations of Algorithm 1
is therefore bounded by
$
\tilde{O}\left( n + n^{3/4}  \sqrt{{\bar{L}}/{\lambda}}  \right) ~.
$
\end{theorem}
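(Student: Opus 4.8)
The plan is to invoke the Catalyst guarantee of \lemref{lem:lin} for the outer loop and to charge each inner minimization of $G_t$ to the non-convex single-problem rate of \thmref{thm:non-convex}, then balance the two factors through the choice $\kappa = \bar L/\sqrt n$. First I would observe that each subproblem $G_t(w) = F(w) + \frac{\kappa}{2}\|w-y^{(t-1)}\|^2$ is exactly of the regularized form required by Algorithm~1: expanding the proximal term and collecting the quadratic part gives $G_t(w) = \frac{1}{n}\sum_{i=1}^n \tilde\phi_i(w) + \frac{\lambda+\kappa}{2}\|w\|^2$, where $\tilde\phi_i(w) = \phi_i(w) - \kappa\, w^{\top} y^{(t-1)} + \frac{\kappa}{2}\|y^{(t-1)}\|^2$ differs from $\phi_i$ only by an affine term. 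Hence each $\tilde\phi_i$ is still $L_i$-smooth (its Hessian is unchanged) and $G_t$ is $(\lambda+\kappa)$-strongly convex because $F$ is already $\lambda$-strongly convex. This is the key structural point that lets the same $\bar L$ reappear in the inner rate despite the lack of individual convexity.

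Next I would bound the two factors separately. For the inner cost, \thmref{thm:non-convex} applied to $G_t$ (with its strong-convexity parameter $\lambda+\kappa$ playing the role of $\lambda$) gives an inner iteration count of $\tilde O\!\left(\bar L^2/(\lambda+\kappa)^2 + n\right)$. The choice $\kappa = \bar L/\sqrt n$ is made precisely so that $\bar L^2/\kappa^2 = n$; since $\lambda+\kappa \ge \kappa$, we get $\bar L^2/(\lambda+\kappa)^2 \le n$, and the inner cost collapses to $\tilde O(n)$. For the outer count, \lemref{lem:lin} gives $T = \tilde O\!\left(\sqrt{(\lambda+\kappa)/\lambda}\right)$. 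The hypothesis $(\bar L/\lambda)^2 \ge 3n$ forces $\kappa = \bar L/\sqrt n \ge \sqrt 3\,\lambda > \lambda$, so $\lambda+\kappa = \Theta(\kappa)$ and $\sqrt{(\lambda+\kappa)/\lambda} = \Theta\!\big(\sqrt{\kappa/\lambda}\big) = \Theta\!\big(n^{-1/4}\sqrt{\bar L/\lambda}\big)$, a quantity that is itself $\ge 1$ under the same hypothesis, matching the stated $T = \tilde\Omega(1 + n^{-1/4}\sqrt{\bar L/\lambda})$.

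Multiplying the two factors yields a total of $\tilde O\!\big(n^{-1/4}\sqrt{\bar L/\lambda}\cdot n\big) = \tilde O\!\big(n^{3/4}\sqrt{\bar L/\lambda}\big)$ inner iterations, and adding the $\tilde O(n)$ cost of the single inner solve that remains when $T$ is driven to its $O(1)$ floor gives the claimed $\tilde O\!\big(n + n^{3/4}\sqrt{\bar L/\lambda}\big)$. The choice $\kappa = \bar L/\sqrt n$ is, up to the constant $\sqrt 3$, the minimizer of $\sqrt{\kappa/\lambda}\,(\bar L^2/\kappa^2 + n)$, which is why no better balance is available by this route and why the exponent $3/4$ (rather than $1/2$, as in the convex case) arises directly from the quadratic condition-number dependence of the non-convex inner rate.

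The step I expect to require the most care is reconciling the two notions of ``solving $G_t$.'' \lemref{lem:lin} demands that at stage $t$ we return a point with $G_t(w^{(t)}) - \min G_t \le \epsilon_t$, a deterministic accuracy, whereas \thmref{thm:non-convex} only controls $\E[C_t]$, and hence $\E[G_t(w^{(t)}) - \min G_t]$, in expectation. I would close this gap in the standard Catalyst way: warm-start each inner run at $w^{(t-1)}$ so that its initial potential $C_0$ is polynomially bounded (keeping $\log(C_0/\epsilon_t)$ inside the $\tilde O$), convert the expectation bound to a high-probability one via Markov's inequality by lengthening or repeating each inner run by an $O(\log(T/\delta))$ factor and keeping the best iterate, and finally union-bound the failure probabilities over the $T$ outer stages. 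Each of these steps costs only additional logarithmic factors, which are absorbed into $\tilde O$ and account for the ``with high probability'' qualifier in the statement.
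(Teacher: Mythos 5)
Your proposal is correct and follows essentially the same route as the paper: invoke \lemref{lem:lin} for the outer iteration count, apply \thmref{thm:non-convex} to the $(\lambda+\kappa)$-strongly convex subproblems $G_t$ so that $\kappa=\bar L/\sqrt n$ makes each inner solve cost $\tilde O(n)$, and multiply; the only (immaterial) differences are that the paper conservatively writes the inner cost with $(\bar L+\kappa)^2$ in the numerator where your affine-perturbation argument gives $\bar L^2$, and that you spell out the expectation-to-high-probability conversion that the paper relegates to a footnote citing \cite{ShalevZhangAcc2015}.
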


Observe that for the case of convex individual functions, accelerating
Algorithm 1 yields the upper bound
$
\tilde{O}\left( n + n^{1/2}  \sqrt{{\bar{L}}/{\lambda}}  \right) ~.
$
Therefore, the convex and non-convex cases have the same dependency on the
condition number, but the non-convex case has a worse dependence on $n$.




\section{Proofs}

\subsection{Proof of \thmref{thm:mainReguConv}}

Observe that $0 = \nabla F(w^*) = \frac{1}{n} \sum_i \nabla
\phi_i(w^*) + \lambda w^*$, which implies that $w^* = \frac{1}{\lambda n
  } \sum_i \alpha_i^*$, where $\alpha_i^* = - \nabla \phi_i(w^*)$.

Define $u_i =- \nabla \phi_i(w^{(t-1)})$ and $v_i = -u_i +
\alpha_i^{(t-1)}$. We also denote two potentials:
\[
A_t = \sum_{j=1}^n \frac{1}{q_j} \|\alpha^{(t)}_j  - \alpha^*_j\|^2
~~~,~~~
B_t = \|w^{(t)}-w^*\|^2 ~.
\]
We will first analyze the evolution of $A_t$ and $B_t$.  If on round
$t$ we update using element $i$ then $\alpha_i^{(t)} = (1-\beta_i)
\alpha_i^{(t-1)} + \beta_i u_i$. It
follows that,
\begin{align}  \label{eqn:Aevo}
&A_{t-1}- A_t =  - \frac{1}{q_i} \|\alpha^{(t)}_i  - \alpha^*_i\|^2 +  \frac{1}{q_i}\|\alpha^{(t-1)}_i  -
\alpha^*_i\|^2  \\ \nonumber
&= - \frac{1}{q_i}\|(1-\beta_i) (\alpha^{(t-1)}_i  - \alpha^*_i) +
\beta_i(u_i - \alpha_i^*) \|^2  \\ \nonumber
& ~~~~+ \frac{1}{q_i}\|\alpha^{(t-1)}_i  -
\alpha^*_i\|^2 \\ \nonumber
&=  \frac{1}{q_i} ( -(1-\beta_i)\|\alpha^{(t-1)}_i  - \alpha^*_i\|^2 -
  \beta_i \|u_i - \alpha_i^*\|^2 \\ \nonumber 
&~~~+ \beta_i(1-\beta_i)\|\alpha^{(t-1)}_i - u_i \|^2 + \|\alpha^{(t-1)}_i  -
\alpha^*_i\|^2 ~) \\ \nonumber
&= \frac{\beta_i}{q_i} \left( \|\alpha^{(t-1)}_i  - \alpha^*_i\|^2 -
  \|u_i - \alpha_i^*\|^2 +
  (1-\beta_i) \|v_i\|^2 \right) \\
&= \frac{\eta\,\lambda}{q_i^2} \left( \|\alpha^{(t-1)}_i  - \alpha^*_i\|^2 -
  \|u_i - \alpha_i^*\|^2 +
  (1-\beta_i) \|v_i\|^2 \right) ~.
\end{align}
Taking expectation w.r.t. $i \sim q$ we obtain
\begin{align} \nonumber
\E[&A_{t-1}- A_t ] = \\
&\eta \lambda \sum_{i=1}^n \frac{1}{q_i}  \left( \|\alpha^{(t-1)}_i  - \alpha^*_i\|^2 -
  \|u_i - \alpha_i^*\|^2 +
  (1-\beta_i) \|v_i\|^2 \right) \\
&= \eta \lambda \left( A_{t-1} + \sum_{i=1}^n \frac{1}{q_i}  \left( -
  \|u_i - \alpha_i^*\|^2 +
  (1-\beta_i) \|v_i\|^2 \right) \right) ~. \label{eqn:EApot}
\end{align}
As to the second potential, we have
\begin{align} \label{eqn:Bevo}
B_{t-1}-B_t &= 
-\|w^{(t)}-w^*\|^2+\|w^{(t-1)}-w^*\|^2 \\ \nonumber 
&= 2 \,  (w^{(t-1)}-w^*)^\top (\eta\, v_i) -  \eta_i^2 \|v_i\|^2 ~.
\end{align}
Taking expectation w.r.t. $i \sim q$ and noting that $\E_{i \sim q}
(\eta_i v_i) = \eta \nabla F(w^{(t-1)})$ we obtain
\begin{align} \label{eqn:EBpot}
\E[B_{t-1}-B_t] = & 2 \eta\, (w^{(t-1)}-w^*)^\top \nabla F(w^{(t-1)})
\\ \nonumber &-
\frac{\eta^2}{n^2} \sum_i \frac{1}{q_i} \|v_i\|^2 ~.
\end{align}

We now take a potential of the form $C_t = c_a A_t + c_b B_t$. 
Combining \eqref{eqn:EApot} and \eqref{eqn:EBpot} we obtain
\begin{align} \nonumber
\E[C_{t-1}-C_t] &= c_a \eta \lambda A_{t-1}  - c_a \eta \lambda \sum_i
\frac{1}{q_i} \|u_i - \alpha^*_i\|^2 \\ \nonumber &+ 2 c_b \eta (w^{(t-1)}-w^*)^\top
\nabla F(w^{(t-1)}) \\
&+ \sum_i \frac{1}{q_i} \|v_i\|^2 \left( c_a \eta \lambda (1-\beta_i) -
  \frac{c_b \eta^2}{n^2} \right) \label{eqn:viCoeff}
\end{align}
We will choose the parameters $\eta, c_a, c_b$ such that
\begin{align} \label{eqn:etaConditions}
&\eta \le \min\left\{\frac{q_i}{2 \lambda} ~,~ \frac{1}{4 \bar{L}}
\right\} ~~\textrm{and}~~ 
\frac{c_b}{c_a} = \frac{\lambda n^2}{2 \eta}
\end{align}
This implies that $\beta_i = \eta_i \lambda n = \frac{\eta
  \lambda}{q_i} \le 1/2$, and therefore the term in \eqref{eqn:viCoeff} is non-negative. 
Next, due to strong convexity of $F$ we have that
\begin{align*}
&(w^{(t-1)}-w^*)^\top \nabla F(w^{(t-1)}) \\
~&\ge~ F(w^{(t-1)})-F(w^*) +
\frac{\lambda}{2} \| w^{(t-1)}-w^*\|^2  ~.
\end{align*}
Therefore,
\begin{align} \nonumber
&\E[C_{t-1}-C_t] = c_a \eta \lambda A_{t-1}  - c_a \eta \lambda \sum_i
\frac{1}{q_i} \|u_i - \alpha^*_i\|^2 \\ \nonumber & + 2 c_b \eta
(F(w^{(t-1)})-F(w^*)) + c_b \eta \lambda B_{t-1}  \\ \nonumber
&= \eta\,\lambda\,C_{t-1} + \\ & \eta \left(2 c_b (F(w^{(t-1)})-F(w^*)) -
  c_a \lambda \sum_i
\frac{1}{q_i} \|u_i - \alpha^*_i\|^2 \right) . \label{eqn:EC1}
\end{align}
Note that $u_i - \alpha^*_i = \nabla \phi_i(w^{(t-1)}) -
\nabla \phi_i(w^*)$. In \lemref{lem:smoothConv} we show that when
$\phi_i$ is $L_i$ smooth and convex then 
\begin{align} \label{eqn:lem:smoothConv}
&\|\nabla \phi_i(w^{(t-1)}) -
\nabla \phi_i(w^*)\|^2 \\ \nonumber &\le 2 \, L_i\, (\phi_i(w^{(t-1)}) - \phi_i(w^*)
- \nabla \phi_i(w^*)^\top (w^{(t-1)}-w^*))
\end{align}
Therefore, denoting $\tau = \left(2\, \max_i \frac{L_i}{q_i}\right)$
we obtain that
\begin{align} \label{eqn:departPoint}
&\sum_i \frac{1}{q_i} \|u_i -
\alpha^*_i\|^2 = \sum_i \frac{1}{q_i} \|\nabla \phi_i(w^{(t-1)}) -
\nabla \phi_i(w^*)\|^2 \\ \nonumber
&\le \tau \, \sum_i  (\phi_i(w^{(t-1)}) - \phi_i(w^*)
- \nabla \phi_i(w^*)^\top (w^{(t-1)}-w^*)) \\ \nonumber
&= \tau \, n\,\left(
  F(w^{(t-1)})-F(w^*)-\frac{\lambda}{2} \|w^{(t-1)}-w^*\|^2 \right) \\ 
&\le \tau\, n\,\left(
  F(w^{(t-1)})-F(w^*) \right) ~.
\end{align}
The definition of $q_i$ implies that for every $i$,
\begin{equation} \label{eqn:Li_over_q}
\frac{L_i}{q_i} = 2n\bar{L}\,\frac{L_i}{L_i + \bar{L}} \le 2n\bar{L} ~.
\end{equation}
Combining this with \eqref{eqn:departPoint} and \eqref{eqn:EC1} we obtain
\begin{align*}
&\E[C_{t-1}-C_t]  \ge \\
&\eta\,\lambda\,C_{t-1} + \eta \left(2 c_b - 4
  n^2\bar{L} \lambda c_a \right)
(F(w^{(t-1)})-F(w^*))
\end{align*}
Plugging the value of $c_b = \frac{c_a \lambda n^2}{2\eta}$ yields
that the coefficient in the last term is
\[
2 \frac{c_a \lambda n^2}{2\eta}- 4
  n^2\bar{L} \lambda c_a = c_a \lambda n^2 \left( \frac{1}{\eta} - 4
  \bar{L}  \right) \ge 0 ~,
\]
where we used the choice of $\eta \le \frac{1}{4 \bar{L}}$. In
summary, we have shown that $\E[C_{t-1}-C_t]  \ge
\eta\,\lambda\,C_{t-1}$, which implies that
\[
\E[C_t] ~\le~ (1-\eta\,\lambda) \,C_{t-1} ~.
\]
Taking expectation over $C_{t-1}$ and continue recursively, we
obtain that $\E[C_t] ~\le~ (1-\eta\,\lambda)^t \, C_0 ~\le~ e^{-\eta\,\lambda\,t} \, C_0$. 

Finally, since $q_i \ge 1/(2n)$ for every $i$, we can choose 
\[
\eta = \min\left\{ \frac{1}{4
    \bar{L}}  ~,~
  \frac{1}{4\,\lambda n} \right\}
\]
and therefore 
\[
\frac{1}{\eta \lambda} \le 4 \left(n + \frac{
  \bar{L}}{\lambda} \right) ~.
\]
The proof is concluded by choosing $c_b=\lambda/2$ and
$c_a = \eta/n^2$. 

\subsection{Proof of \lemref{lem:varianceReduction}}
We have:
\begin{align*}
&\E[\|w^{(t)}-w^{(t-1)}\|^2] = \sum_i q_i \eta_i^2 \|\nabla
\phi_i(w^{(t-1)})+ \alpha_i^{(t-1)}\|^2 \\
&\le \frac{3 \eta^2}{n^2} \sum_i \frac{1}{q_i} (\|\nabla
\phi_i(w^{(t-1)}) + \alpha_i^*\|^2 \\&\hspace{2cm} + \|\alpha_i^{(t-1)} - \alpha_i^*
\|^2) \\ &\hspace{5cm}  \textrm{(triangle inequality)}\\
&= \frac{3 \eta^2}{n^2} \sum_i (\tfrac{1}{q_i} \|\nabla
  \phi_i(w^{(t-1)}) - \nabla \phi_i(w^*)\|^2 \\ &\hspace{4cm} + \tfrac{1}{q_i}\|\alpha_i^{(t-1)} - \alpha_i^*
\|^2 ) \\
&\le \frac{3 \eta^2}{n^2} \sum_i \left(2n\bar{L} \,\|w^{(t-1)} - w^*\|^2 + \tfrac{1}{q_i}\|\alpha_i^{(t-1)} - \alpha_i^*
\|^2 \right) \\ &\hspace{5cm} \textrm{(smoothness and \eqref{eqn:Li_over_q})}\\
&\le 3\,\eta \, \left(\tfrac{1}{2} \|w^{(t-1)} - w^*\|^2  +
  C_{t-1}\right)  \\ &\hspace{5cm} 
(\textrm{because } \eta \le \tfrac{1}{4\bar{L}} )~.
\end{align*}

\subsection{Proof of \thmref{thm:mainNonReguConv}}
The beginning of the proof is identical to the proof of
\thmref{thm:mainReguConv}. The change starts in
\eqref{eqn:departPoint}, where we cannot apply
\eqref{eqn:lem:smoothConv} to $\phi_{n+1}$ because it is not
convex. To overcome this, we first apply \eqref{eqn:lem:smoothConv} to
$\phi_1,\ldots,\phi_n$, and obtain that
\begin{align*} 
&\sum_{i=1}^n \frac{1}{q_i} \|u_i -
\alpha^*_i\|^2 = \sum_{i=1}^n \frac{1}{q_i} \|\nabla \phi_i(w^{(t-1)}) -
\nabla \phi_i(w^*)\|^2 \\ 
&\le \left(2\, \max_i \frac{\tilde{L}_i}{q_i}\right) \,\cdot\\
& \sum_{i=1}^n  (\phi_i(w^{(t-1)}) - \phi_i(w^*)
- \nabla \phi_i(w^*)^\top (w^{(t-1)}-w^*)) \\
&= 2\,  (n+1) \, \left(\max_i \frac{\tilde{L}_i}{q_i}\right) \, (F(w^{(t-1)}) -
F(w^*)) ~,
\end{align*}
where the last equality follows from the fact that $\sum_{i=1}^n
\phi_i(w) = (n+1) F(w)$, which also implies that $\sum_i
\nabla \phi_i(w^*) = 0$. 
In addition, since $\phi_{n+1}(w) =
-\frac{\lambda(n+1)}{2} \|w\|^2$, we have 
\begin{align*}
&\frac{1}{q_{n+1}} \|\nabla \phi_{n+1}(w)-\nabla \phi_{n+1}(w^*)\|^2
\\ &= \frac{\lambda^2 (n+1)^2}{q_{n+1}}
\|w-w^*\|^2 \\
&= 2\,(n+1)\,\frac{\tilde{L}_{n+1}}{q_{n+1}} \, \cdot \,\frac{\lambda}{2} \,\|w-w^*\|^2 \\
&\le 2\,(n+1)\,\frac{\tilde{L}_{n+1}}{q_{n+1}}  (F(w)-F(w^*)) ~,
\end{align*}
where the last inequality is because of the $\lambda$-strong convexity
of $F$. Combining the two inequalities, we obtain an analogue of
\eqref{eqn:departPoint},
\begin{align*}
&\sum_{i=1}^{n+1} \frac{1}{q_i} \|u_i-\alpha_i^*\|^2 \\
&\le~ 
4\,(n+1)\, \left(\max_{i \in [n+1]} \frac{\tilde{L}_i}{q_i}\right) \, (F(w^{(t-1)}) -
F(w^*)) ~.
\end{align*}
The rest of the proof is almost identical, except that we have $n$
replaced by $n+1$ and $\bar{L}$ replaced by $\tilde{L} := \frac{1}{n+1}
\sum_{i=1}^n \tilde{L}_i$. We now need to choose
\[
\eta = \min\left\{ \frac{1}{8
    \tilde{L}}  ~,~
  \frac{1}{4\,\lambda (n+1)} \right\} ~.
\]
Observe that, 
\[
(n+1)\tilde{L} = \frac{n+1}{n} \left(\sum_{i=1}^n L_i \right) + \lambda (n+1) = (n+1)
(\bar{L} + \lambda) ~,
\]
so we can rewrite 
\[
\eta = \min\left\{ \frac{1}{8
    (\bar{L} + \lambda)}  ~,~
  \frac{1}{4\,\lambda (n+1)} \right\} ~.
\]
This yields
\[
\frac{1}{\eta \lambda} \le  4 \left(n + 3 + \frac{2
  \bar{L}}{\lambda} \right) ~.
\]

\subsection{Proof of \thmref{thm:non-convex}}
The beginning of the proof is identical to the proof of
\thmref{thm:mainReguConv} up to \eqref{eqn:viCoeff}.

We will choose the parameters $\eta, c_a, c_b$ such that
\begin{align} \label{eqn:etaConditions:non-convex}
&\eta \le \min\left\{\frac{q_i}{2 \lambda} ~,~ \frac{1}{4 \bar{L}}
\right\} ~~\textrm{and}~~ 
\frac{c_b}{c_a} = \frac{\lambda n^2}{2 \eta}
\end{align}
This implies that $\beta_i = \eta_i \lambda n = \frac{\eta
  \lambda}{q_i} \le 1/2$, and therefore the term in \eqref{eqn:viCoeff} is non-negative. 
Next, due to strong convexity of $F$ we have that
\begin{align*}
&(w^{(t-1)}-w^*)^\top \nabla F(w^{(t-1)})  \\
~&\ge~ F(w^{(t-1)})-F(w^*) +
\frac{\lambda}{2} \| w^{(t-1)}-w^*\|^2  
\\ &\ge~ \lambda \|w^{(t-1)} - w^*\|^2 ~.
\end{align*}
Therefore,
\begin{align} \nonumber
&\E[C_{t-1}-C_t] \\ \nonumber &= c_a \eta \lambda A_{t-1}  - c_a \eta \lambda \sum_i
\frac{1}{q_i} \|u_i - \alpha^*_i\|^2 + 2 c_b \eta \lambda B_{t-1}  \\
&= \eta\,\lambda\,C_{t-1} + \eta\,\lambda \left(c_b B_{t-1} -
  c_a \sum_i
\frac{1}{q_i} \|u_i - \alpha^*_i\|^2 \right) ~. \label{eqn:EC1:non-convex}
\end{align}
Next, we use the smoothness of the $\phi_i$ to get
\begin{align*} 
&\sum_i \frac{1}{q_i} \|u_i -
\alpha^*_i\|^2 = \sum_i \frac{1}{q_i} \|\nabla \phi_i(w^{(t-1)}) -
\nabla \phi_i(w^*)\|^2 \\ 
&\le \sum_i \frac{L_i^2}{q_i} \|w^{(t-1)} -w^*\|^2 = B_{t-1} \sum_i \frac{L_i^2}{q_i} ~.
\end{align*}
The definition of $q_i$ implies that for every $i$,
\[
\frac{L_i}{q_i} = 2n\bar{L}\,\frac{L_i}{L_i + \bar{L}} \le 2n\bar{L} ~,
\]
so by combining with \eqref{eqn:EC1:non-convex} we obtain
\[
\E[C_{t-1}-C_t]  \ge \eta\,\lambda\,C_{t-1} + \eta \lambda \left(c_b -
  2 n^2 \bar{L}^2 c_a \right) B_{t-1}
\]
The last term will be non-negative if $\frac{c_b}{c_a} \ge 2 n^2
\bar{L}^2$. Since we chose $\frac{c_b}{c_a} = \frac{\lambda n^2}{2
  \eta}$ we obtain the requirement
\[
 \frac{\lambda n^2}{2
  \eta} \ge 2 n^2 \bar{L}^2~\Rightarrow~ \eta \le \frac{\lambda}{4 \bar{L}^2} ~.
\]
In summary, we have shown that $\E[C_{t-1}-C_t]  \ge
\eta\,\lambda\,C_{t-1}$. The rest of the proof is identical, but the
requirement on $\eta$ is
\[
\eta \le \min\left\{ \frac{\lambda}{4
    \bar{L}^2}  ~,~
  \frac{1}{4\,\lambda n} \right\} ~,
\]
and therefore 
\[
\frac{1}{\eta \lambda} \le 4 \left(n + \frac{
  \bar{L}^2}{\lambda^2} \right) ~.
\]

\section{Proof of \thmref{thm:non-convex-acc}}
\begin{proof}
Each iteration of Algorithm 3 requires to minimize $G_t$ to accuracy $\epsilon_t \le O(1)\,
(1-\rho)^t$, where $\rho = 0.9\,\sqrt{q}$. If $t \le T$ where $T$ is
as defined in \lemref{lem:lin}, then we have that, 
\[
-t \log(1-\rho) \le - T \log(1-\rho) =
\frac{-\log(1-\rho)}{\rho}  \log\left( \frac{800}{q\,\epsilon} \right)
\]
Using \lemref{lem:technicallin}, $\frac{-\log(1-\rho)}{\rho} \le 2$
for every $\rho \in (0,1/2)$. In our case, $\rho$ is indeed in
$(0,1/2)$ because of the definition of $\kappa$ and our
assumption that $(\bar{L}/\lambda)^2 \ge
3n$. Hence, 
\[
\log(\tfrac{1}{\epsilon_t}) ~=~ O(\log((\lambda + \kappa)/(\lambda
\epsilon))) ~.
\]
Combining this with \thmref{thm:non-convex}, and using the definition
of $G_t$, we obtain that the number of iterations
required\footnote{While \thmref{thm:non-convex} bounds the expected
  sub-optimality, by techniques similar to \cite{ShalevZhangAcc2015} it can be converted
  to a bound that holds with high probability.} by each
application of Algorithm 3 is
\[
\tilde{O}\left(  \frac{
  (\bar{L} + \kappa)^2}{(\lambda+\kappa)^2} + n \right) = \tilde{O}(n)
~,
\]
where in the equality we used the definition of $\kappa$. 
Finally, multiplying this by the value of $T$ as given in \lemref{lem:lin} we obtain 
(ignoring log-terms):
\[
\sqrt{1 + \frac{\kappa}{\lambda}} \, n ~\le~ 
(1 + \sqrt{\frac{\kappa}{\lambda}})\, n  = 
 n +
n^{3/4} \sqrt{\frac{\bar{L}}{\lambda}} ~.
\]
\end{proof}

\subsection{Technical Lemmas}

\begin{lemma} \label{lem:smoothConv}
Assume that $\phi$ is $L$-smooth and convex. Then, for every
$w$ and $u$, 
\[ 
\|\nabla \phi(w) - \nabla \phi(u)\|^2 \le 
 2L\left[ \phi(w) - \phi(u) - \nabla \phi(u)^\top (w-u)\right] ~.
\]
\end{lemma}
\begin{proof}
For every $i$, define
\[
g(w) = \phi(w) - \phi(u) - \nabla \phi(u)^\top (w-u) ~.
\]
Clearly, since $\phi$ is $L$-smooth so is $g$. In addition, by
convexity of $\phi$ we have $g(w) \ge 0$ for all $w$. It follows
that $g$ is non-negative and smooth, and therefore, it is
self-bounded (see Section 12.1.3 in \cite{MLbook}):
\[
\|\nabla g(w)\|^2 \le 2L g(w) ~.
\]
Using the definition of $g$, we obtain
\begin{align*}
& \|\nabla \phi(w) - \nabla \phi(u)\|^2 \\
&= \|\nabla g(w)\|^2 \le 2L g(w) \\
&= 2L\left[ \phi(w) - \phi(u) - \nabla \phi(u)^\top (w-u)\right] ~.
\end{align*}
\end{proof}

\begin{lemma} \label{lem:technicallin}
For $a \in (0,1/2)$ we have $-\log(1-a)/a \le 1.4$. 
\end{lemma}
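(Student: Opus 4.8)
The plan is to show that the function $h(a) = -\log(1-a)/a$ is monotonically increasing on $(0,1)$, so that on the half-open interval $(0,1/2)$ it is bounded above by its limiting value at the right endpoint $a = 1/2$. Evaluating there gives $h(1/2) = -\log(1/2)/(1/2) = 2\log 2 \approx 1.386 < 1.4$, which yields the claim (in fact with strict inequality, since $a=1/2$ is excluded).

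To make this rigorous I would use the power series of the logarithm rather than differentiating $h$ directly. For $a \in (0,1)$ we have $-\log(1-a) = \sum_{k=1}^\infty a^k/k$, so
\[
h(a) = \frac{1}{a}\sum_{k=1}^\infty \frac{a^k}{k} = \sum_{k=1}^\infty \frac{a^{k-1}}{k} = 1 + \frac{a}{2} + \frac{a^2}{3} + \cdots.
\]
This is a power series with strictly positive coefficients, hence $h$ is strictly increasing for $a > 0$, which is the monotonicity claimed above.

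In fact the series representation lets me skip the monotonicity argument entirely and bound $h$ termwise: since $0 < a < 1/2$, each term satisfies $a^{k-1}/k \le (1/2)^{k-1}/k$, so
\[
h(a) \le \sum_{k=1}^\infty \frac{(1/2)^{k-1}}{k} = 2\sum_{k=1}^\infty \frac{(1/2)^{k}}{k} = 2\bigl(-\log(1-\tfrac12)\bigr) = 2\log 2 ~.
\]
The one numerical point to verify is that $2\log 2 < 1.4$, i.e. $\log 2 < 0.7$; since $\log 2 = 0.6931\ldots$ this holds with room to spare.

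There is no genuine obstacle here, as the lemma is elementary. The only things to get right are the recognition that the supremum over $(0,1/2)$ is attained in the limit as $a \uparrow 1/2$ (and hence that the stated inequality is even strict on the open interval), and the final numerical check that the constant $2\log 2$ sits safely below the stated bound $1.4$.
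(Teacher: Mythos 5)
Your proof is correct and takes essentially the same route as the paper: both show $-\log(1-a)/a$ is increasing on $(0,1/2)$ and bounded by its value $2\log 2 \approx 1.386$ at the right endpoint. The only difference is that you justify the monotonicity (and offer a termwise bound that bypasses it) via the power series $\sum_{k\ge 1} a^{k-1}/k$, whereas the paper simply asserts that the derivative is positive; your version is a rigorous filling-in of that "easy to verify" step rather than a different argument.
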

\begin{proof}
Denote $g(a) = -\log(1-a)/a$. It is easy to verify that the derivative
of $g$ in $(0,1/2)$ is positive and that $g(0.5) \le 1.4$. The proof follows.
\end{proof}

\section{Summary}
We have described and analyzed a dual free version of SDCA that
supports non-regularized objectives and non-convex individual loss
functions. Our analysis shows a linear rate of convergence for all of
these cases. Two immediate open questions are whether the
worse dependence on the condition number for the non-accelerated
result for the non-convex case is necessary, and whether the
factor $n^{3/4}$ in \thmref{thm:non-convex-acc} can be reduced to
$n^{1/2}$. 

\paragraph{Acknowledgements:} In a previous draft of this paper, the
bound for the non-convex case was $n^{5/4} + n^{3/4}
\sqrt{\bar{L}/\lambda}$. We thank Ohad Shamir for showing us how to
derive the improved bound of $n + n^{3/4} \sqrt{\bar{L}/\lambda}$.
The work is supported by ICRI-CI and by the European Research
Council (TheoryDL project).

{
\bibliography{curRefs}

\begin{thebibliography}{19}
\providecommand{\natexlab}[1]{#1}
\providecommand{\url}[1]{\texttt{#1}}
\expandafter\ifx\csname urlstyle\endcsname\relax
  \providecommand{\doi}[1]{doi: #1}\else
  \providecommand{\doi}{doi: \begingroup \urlstyle{rm}\Url}\fi

\bibitem[Agarwal \& Bottou(2014)Agarwal and Bottou]{agarwal2014lower}
Agarwal, Alekh and Bottou, Leon.
\newblock A lower bound for the optimization of finite sums.
\newblock In \emph{ICML}, 2014.

\bibitem[Allen-Zhu \& Yuan(2015)Allen-Zhu and Yuan]{allen2015univr}
Allen-Zhu, Zeyuan and Yuan, Yang.
\newblock Univr: A universal variance reduction framework for proximal
  stochastic gradient method.
\newblock \emph{arXiv preprint arXiv:1506.01972}, 2015.

\bibitem[Arjevani et~al.(2015)Arjevani, Shalev-Shwartz, and
  Shamir]{arjevani2015lower}
Arjevani, Yossi, Shalev-Shwartz, Shai, and Shamir, Ohad.
\newblock On lower and upper bounds for smooth and strongly convex optimization
  problems.
\newblock \emph{arXiv preprint arXiv:1503.06833}, 2015.

\bibitem[Csiba \& Richt{\'a}rik(2015)Csiba and Richt{\'a}rik]{csiba2015primal}
Csiba, Dominik and Richt{\'a}rik, Peter.
\newblock Primal method for erm with flexible mini-batching schemes and
  non-convex losses.
\newblock \emph{arXiv preprint arXiv:1506.02227}, 2015.

\bibitem[Defazio(2014)]{defazio2014new}
Defazio, Aaron.
\newblock \emph{New Optimisation Methods for Machine Learning}.
\newblock PhD thesis, Australian National Univer- sity, 2014.

\bibitem[Defazio et~al.(2014{\natexlab{a}})Defazio, Bach, and
  Lacoste-Julien]{defazio2014saga}
Defazio, Aaron, Bach, Francis, and Lacoste-Julien, Simon.
\newblock Saga: A fast incremental gradient method with support for
  non-strongly convex composite objectives.
\newblock In \emph{Advances in Neural Information Processing Systems}, pp.\
  1646--1654, 2014{\natexlab{a}}.

\bibitem[Defazio et~al.(2014{\natexlab{b}})Defazio, Caetano, and
  Domke]{defazio2014finito}
Defazio, Aaron~J, Caetano, Tib{\'e}rio~S, and Domke, Justin.
\newblock Finito: A faster, permutable incremental gradient method for big data
  problems.
\newblock \emph{arXiv preprint arXiv:1407.2710}, 2014{\natexlab{b}}.

\bibitem[He \& Tak{\'a}{\v{c}}(2015)He and Tak{\'a}{\v{c}}]{he2015dual}
He, Xi and Tak{\'a}{\v{c}}, Martin.
\newblock Dual free sdca for empirical risk minimization with adaptive
  probabilities.
\newblock \emph{arXiv preprint arXiv:1510.06684}, 2015.

\bibitem[Jin et~al.(2015)Jin, Kakade, Musco, Netrapalli, and
  Sidford]{jin2015robust}
Jin, Chi, Kakade, Sham~M, Musco, Cameron, Netrapalli, Praneeth, and Sidford,
  Aaron.
\newblock Robust shift-and-invert preconditioning: Faster and more sample
  efficient algorithms for eigenvector computation.
\newblock \emph{arXiv preprint arXiv:1510.08896}, 2015.

\bibitem[Johnson \& Zhang(2013)Johnson and Zhang]{johnson2013accelerating}
Johnson, Rie and Zhang, Tong.
\newblock Accelerating stochastic gradient descent using predictive variance
  reduction.
\newblock In \emph{Advances in Neural Information Processing Systems}, pp.\
  315--323, 2013.

\bibitem[Kone{\v{c}}n{\`y} \& Richt{\'a}rik(2013)Kone{\v{c}}n{\`y} and
  Richt{\'a}rik]{konevcny2013semi}
Kone{\v{c}}n{\`y}, Jakub and Richt{\'a}rik, Peter.
\newblock Semi-stochastic gradient descent methods.
\newblock \emph{arXiv preprint arXiv:1312.1666}, 2013.

\bibitem[{Le Roux} et~al.(2012){Le Roux}, {Schmidt}, and {Bach}]{LSB12-sgdexp}
{Le Roux}, Nicolas, {Schmidt}, Mark, and {Bach}, Francis.
\newblock A stochastic gradient method with an exponential convergence rate for
  finite training sets.
\newblock In \emph{Advances in Neural Information Processing Systems}, pp.\
  2663--2671, 2012.

\bibitem[Lin et~al.(2015)Lin, Mairal, and Harchaoui]{lin2015universal}
Lin, Hongzhou, Mairal, Julien, and Harchaoui, Zaid.
\newblock A universal catalyst for first-order optimization.
\newblock In \emph{Advances in Neural Information Processing Systems}, pp.\
  3366--3374, 2015.

\bibitem[Shalev-Shwartz \& Zhang(2015)Shalev-Shwartz and
  Zhang]{ShalevZhangAcc2015}
Shalev-Shwartz, S. and Zhang, T.
\newblock Accelerated proximal stochastic dual coordinate ascent for
  regularized loss minimization.
\newblock \emph{Mathematical Programming SERIES A and B (to appear)}, 2015.

\bibitem[Shalev-Shwartz(2015)]{shalev2015sdca}
Shalev-Shwartz, Shai.
\newblock Sdca without duality.
\newblock \emph{arXiv preprint arXiv:1502.06177}, 2015.

\bibitem[Shalev-Shwartz \& Ben-David(2014)Shalev-Shwartz and Ben-David]{MLbook}
Shalev-Shwartz, Shai and Ben-David, Shai.
\newblock \emph{Understanding Machine Learning: From Theory to Algorithms}.
\newblock Cambridge university press, 2014.

\bibitem[Shalev-Shwartz \& Zhang(2013)Shalev-Shwartz and Zhang]{ShalevZh2013}
Shalev-Shwartz, Shai and Zhang, Tong.
\newblock Stochastic dual coordinate ascent methods for regularized loss
  minimization.
\newblock \emph{Journal of Machine Learning Research}, 14:\penalty0 567--599,
  Feb 2013.

\bibitem[Shamir(2015)]{shamir2014stochastic}
Shamir, Ohad.
\newblock A stochastic pca and svd algorithm with an exponential convergence
  rate.
\newblock In \emph{ICML}, 2015.

\bibitem[Xiao \& Zhang(2014)Xiao and Zhang]{xiao2014proximal}
Xiao, Lin and Zhang, Tong.
\newblock A proximal stochastic gradient method with progressive variance
  reduction.
\newblock \emph{SIAM Journal on Optimization}, 24\penalty0 (4):\penalty0
  2057--2075, 2014.

\end{thebibliography}
\bibliographystyle{icml2016}
}

\end{document}